\documentclass{article}

\usepackage[preprint]{neurips_2026}

\usepackage[utf8]{inputenc} 
\usepackage[T1]{fontenc}    
\usepackage{hyperref}       
\usepackage{url}            
\usepackage{booktabs}       
\usepackage{amsfonts}       
\usepackage{nicefrac}       
\usepackage{microtype}      
\usepackage{xcolor}         
\usepackage{amsthm}
\usepackage{amsmath}
\newtheorem{theorem}{Theorem}
\usepackage{graphicx}
\usepackage{multirow}

\title{DirectTryOn: One-Step Virtual Try-On via Straightened Conditional Transport}

\author{%
  Xianbing Sun \quad
  Jiahui Zhan \quad
  Liqing Zhang \quad
  Jianfu Zhang\thanks{Corresponding author}\\[2mm]
  Shanghai Jiao Tong University\\
  \texttt{\{fufengsjtu, c.sis\}@sjtu.edu.cn}
}

\begin{document}

\maketitle

\begin{figure}[!ht]
\centering
\includegraphics[width=\linewidth]{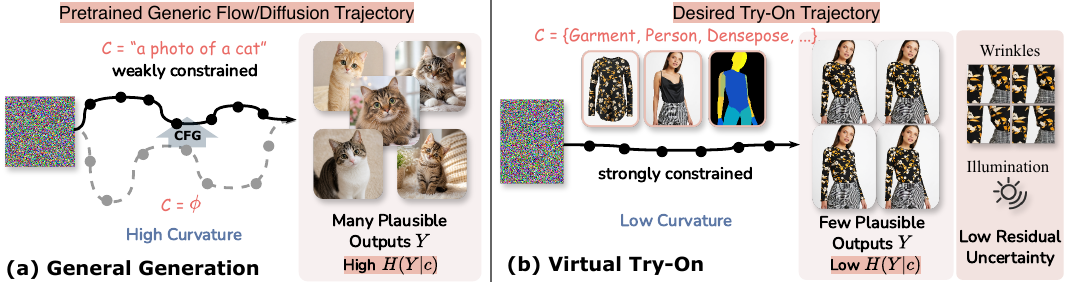}
\caption{Comparison between general image generation and virtual try-on from the perspective of conditional transport. Virtual try-on is strongly constrained by the person and garment inputs, resulting in fewer plausible outputs and lower residual uncertainty. This suggests a much straighter desired transport path, which is the core intuition of this paper.}
\label{fig:teaser}
\end{figure}

\begin{abstract}
Recent diffusion- and flow-based VTON methods achieve strong results with pretrained generative models, but their reliance on multi-step sampling incurs high inference cost, while existing acceleration methods largely overlook the intrinsic structure of the try-on task.
In this paper, we highlight a key observation: VTON outputs are highly constrained by the conditional inputs, suggesting that the conditional sampling trajectory can be much straighter than that in general image generation, making one-step generation a natural solution. However, limited task-specific data makes training from scratch impractical, forcing existing methods to fine-tune pretrained models whose objectives do not encourage such straight conditional trajectories. Thus, the deviation from an ideal straight path mainly comes from the mismatch between pretrained base models and the conditional nature of try-on generation, rather than from the task itself.
Motivated by this insight, we encourage straighter VTON sampling trajectories through three targeted modifications: pure conditional transport, a garment preservation loss, and a self consistency loss. We further introduce a one-step distillation stage. Extensive experiments show that our method achieves state-of-the-art performance with one-step sampling, establishing a new standard for efficient and high-quality VTON.
\end{abstract}

\section{Introduction}

Given a garment image and a person image, virtual try-on aims to synthesize a photorealistic image of the person wearing the specified garment~\citep{han2018viton}. As an important technology for online fashion retail and e-commerce, virtual try-on has attracted growing attention in recent years~\citep{choi2021viton,ge2021parser,gou2023taming,lee2022high,morelli2022dress,morelli2023ladi,choi2025improving,chong2025catvton,zhou2025flowattention}.

Earlier virtual try-on methods~\citep{goodfellow2020generative,choi2021viton,ge2021disentangled,lee2022high,xie2023gp} were mainly based on GANs~\citep{goodfellow2020generative}. Recent advances in diffusion models~\citep{ho2020denoising,rombach2022high} and flow matching~\citep{lipman2022flow} have shifted the field toward adapting powerful pretrained generative models, such as Stable Diffusion variants~\citep{rombach2022high,esser2024rectifiedflow} and Flux~\citep{blackforestlabs2024flux1}, for virtual try-on~\citep{kim2024stableviton,zhu2023tryondiffusion,choi2025improving,zhou2025flowattention}. Although these methods achieve superior results by leveraging strong pretrained priors, their reliance on multi-step sampling leads to substantially increased inference cost, which makes them less practical for real-world applications.

To improve inference efficiency, recent methods such as CAT-DM~\citep{zeng2024cat} and MC-VTON~\citep{luan2025mc} reduce the number of sampling steps. While effective to some extent, we argue that they do not exploit a more fundamental property of virtual try-on: the output is strongly constrained by the conditional inputs.

We first revisit this intuition from flow matching. In general, although the sample-wise optimal transport paths are straight, the marginal velocity field can become curved after marginalizing over the data distribution~\citep{lipman2022flow}. However, in an extreme conditional case where the target is exactly determined by the condition image \(y\), the conditional target distribution collapses to a single point. For each noise sample \(\epsilon\), the path is
$
x_t = t y + (1-t)\epsilon ,
$
with velocity
$
v_t(x_t\mid y)= y-\epsilon = \frac{y-x_t}{1-t}.
$
Since conditioning on \(y\) leaves no ambiguity over the endpoint, marginalization does not bend the conditional velocity field. The ideal conditional transport therefore remains straight, so one-step and multi-step sampling recover the same output \(y\).
This example suggests that when the condition largely determines the target, the conditional transport path can be much straighter than in general image generation, making one-step sampling a natural solution. \textbf{VTON is close to this regime: the try-on result is mainly determined by the person image and the garment image, while the remaining uncertainty is mostly limited to local wrinkles, shading, and lighting variations. These plausible outputs usually differ only slightly from one another.}

The practical challenge is that existing VTON models are typically adapted from pretrained generative models, such as Stable Diffusion variants~\citep{rombach2022high,esser2024rectifiedflow}, rather than trained from scratch on task-specific data. This introduces a mismatch: during pretraining, text conditions do not uniquely determine the output, and unconditional training samples are often included, leading to curved transport trajectories. As a result, directly fine-tuned models are not naturally expected to behave consistently under one-step and multi-step sampling.

Based on this view, we propose DirectTryOn, which consists of three key improvements. First, we propose pure conditional transport to better align the model with the strongly conditioned nature of virtual try-on, by removing unnecessary interference from unconditional generation during both training and inference. Second, since pretrained models tend to emphasize low-frequency information at early timesteps~\citep{park2025blockwise}, which can weaken the preservation of fine-grained garment details, we introduce a garment reconstruction objective to provide a more direct supervision signal for garment fidelity throughout the trajectory. Third, we introduce an aggressive self consistency loss to explicitly encourage the model to produce more consistent velocity predictions across timesteps, thereby promoting a straighter conditional transport path. With these improvements, the model trained with a 1000-step objective already achieves competitive one-step inference performance. Finally, we apply Latent Adversarial Diffusion Distillation (LADD)~\citep{sauer2024fast} to further bridge the gap between multi-step training and one-step generation.

Architecturally, the key challenge in pretrained diffusion- and flow-matching-based VTON is effective garment injection. Existing methods mainly follow two designs: OOTDiffusion~\citep{xu2025ootdiffusion} uses a reference U-Net~\citep{zhang2023referencecontrolnet,hu2024animate,xu2024magicanimate}, while CatVTON~\citep{chong2025catvton} concatenates the garment image with the noisy latent for self-attention-based fusion. We adopt an MMDiT style architecture~\citep{esser2024rectifiedflow}, which combines the strengths of both paradigms by enabling direct garment-person interaction while maintaining separate feature processing streams, thereby providing stronger modeling capacity for high-fidelity try-on generation.

In conclusion, our main contributions are as follows:
\begin{enumerate}
    \item 
    We formulate one-step VTON as a low-entropy conditional transport problem and analyze why its conditional trajectory can be straighter than that of general image generation.
    \item Based on this observation, we propose pure conditional transport, together with a garment preservation loss and a self consistency loss, to explicitly straighten the conditional transport trajectory and make one-step VTON generation practical.
    \item With an additional one-step distillation stage, our model achieves state-of-the-art performance on VITON-HD~\citep{choi2021viton} and DressCode~\citep{morelli2022dress} using only a single sampling step, setting a new standard for efficient and high-quality VTON.
\end{enumerate}

\section{Related works}
\label{sec:rw}

\paragraph{Virtual try-on methods.}
Earlier GAN-based~\citep{goodfellow2020generative} virtual try-on methods~\citep{choi2021viton,ge2021disentangled,lee2022high,xie2023gp} typically decompose the task into two stages: (1) warping the garment to align with the target human body shape, and (2) integrating the warped garment with the person image to synthesize the final result. In more recent diffusion-based and flow-matching-based VTON methods, a key question is how to inject garment information into the main network.
Earlier methods~\citep{morelli2023ladi,gou2023taming} typically use an additional image encoder such as CLIP~\citep{radford2021learning} and inject garment information through cross-attention. TPD~\citep{yang2024texture} emphasizes the importance of self-attention and shows that directly concatenating the conditional image with the noisy latent along the spatial dimension is more effective than using an additional image encoder with cross-attention for garment information injection. OOTDiffusion~\citep{xu2025ootdiffusion} then adopts a reference U-Net~\citep{zhang2023referencecontrolnet,hu2024animate,xu2024magicanimate} to inject garment information through self-attention layers, while FitDiT~\citep{jiang2024fitdit} can be viewed as a Transformer variant of this design. CatVTON~\citep{chong2025catvton}, by contrast, follows a strategy closer to TPD by concatenating the garment image with the noisy latent along the spatial dimension. JCo-MVTON~\citep{wang2025jco} further adopts an MMDiT style~\citep{esser2024rectifiedflow} architecture.
Beyond architectural design, existing methods can also be grouped at the training framework level into two categories: mask-based and mask-free. Mask-based methods are constrained by mask accuracy, whereas mask-free methods are constrained by the quality of training data~\citep{sun2025ds}.

\paragraph{Sampling acceleration.}
In flow matching~\citep{lipman2022flow}, the transport path is typically curved after marginalizing over the data distribution, so multi-step sampling is still needed to reach the target distribution. A natural way to reduce the number of sampling steps is therefore to straighten the transport path. This line of work is rooted in Rectified Flow~\citep{liu2023rectifiedflow}, which aims to reduce ODE integration error by making trajectories straighter. InstaFlow~\citep{liu2023instaflow} pushes this direction toward one-step and few-step generation by showing that reflow, which reassigns noise-sample pairs and further straightens the probability flow, is crucial for accelerating Stable Diffusion. PeRFlow~\citep{yan2024perflow} instead performs straightening in a piecewise manner by dividing the full trajectory into multiple time windows. OFM~\citep{kornilov2024optimal} takes a more theoretical step and aims to recover the straight optimal transport displacement under quadratic cost in a single flow matching training procedure. In the try-on setting, CAT-DM~\citep{zeng2024cat} and MC-VTON~\citep{luan2025mc} also explore sampling acceleration. CAT-DM initializes diffusion from a GAN generated coarse try-on result instead of Gaussian noise, thereby truncating the denoising trajectory, while MC-VTON adopts LADD~\citep{sauer2024fast} to distill a trained virtual try-on model into an 8-step generator.

\begin{figure}[t]
  \centering
  \includegraphics[width=\linewidth]{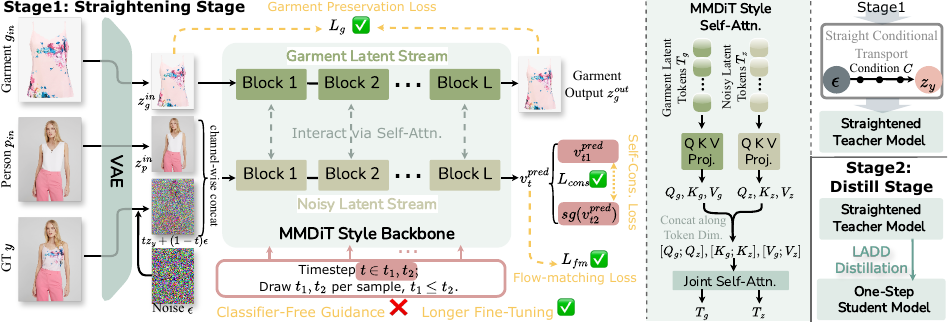}
  \caption{Overview of our framework. Stage 1 trains a teacher model to straighten the conditional transport path for virtual try-on through pure conditional transport, garment supervision, and self consistency loss. Stage 2 distills the straightened teacher into a one-step student model.}
  \label{fig:pipeline}
\end{figure}

\section{Methodology}
\label{sec:method}

\subsection{Framework overview}
Given a person image $p_{in}$ and a garment image $g_{in}$, our goal is to generate a try-on image $y$ where the person wears the specified garment. We adopt a mask-free setting, where the model directly takes the original person image as input without requiring an explicit garment mask. We perform generation in the latent space. Let $z_p^{in}$, $z_g^{in}$, and $z_y$ denote the latents of the person image, garment image, and target image, respectively.
Our framework contains two stages. In Stage 1, we train a straightened teacher model by encouraging near straight conditional transport with pure conditional transport, garment supervision, and self consistency loss. In Stage 2, we distill the straightened teacher into a one-step student model for efficient inference.
We describe the network architecture in Sec.~\ref{sec:net}, the path straightening objectives in Sec.~\ref{sec:s_stage}, and the distillation stage in Sec.~\ref{sec:distill}.

\subsection{Network architecture}
\label{sec:net}

As shown in Fig.~\ref{fig:pipeline}, we adopt an MMDiT style architecture and initialize the model weights from SD3~\citep{esser2024rectifiedflow}. We first compare our design with two representative architectures for garment information injection, namely OOTDiffusion~\citep{xu2025ootdiffusion} and CatVTON~\citep{chong2025catvton}. 

In OOTDiffusion, the reference U-Net behaves like a large garment encoder: the garment latent is extracted with its own model weights and, in principle, does not need to serve as a query in the subsequent self-attention computation or be updated during denoising. As a result, it only needs to be computed once and can then be reused throughout the entire denoising process. AnyFit~\citep{li2024anyfit} adopts this design. In CatVTON, by contrast, the garment image is concatenated with the noisy latent, so the garment latent can interact with the noisy latent through self-attention and is updated jointly with it, but it does not have its own separate weights. Consequently, the garment related computation has to be repeated at every denoising step. From this perspective, the reference U-Net design is more efficient at inference, whereas the concatenation design is simpler and has a smaller model size.

MMDiT combines the strengths of both designs: different branches can interact through self-attention, while each branch still maintains its own weights for feature processing. Although this design is computationally more expensive, our goal is one-step generation, so we prioritize model capacity and representation ability. Compared with JCo-MVTON~\citep{wang2025jco}, which also follows an MMDiT style design, we do not introduce a separate branch for the person image. Instead, we concatenate the person image with the noisy latent along the channel dimension, since preserving person identity is relatively simple and does not require an additional stream.

\begin{figure}[t]
  \centering
  \includegraphics[width=\linewidth]{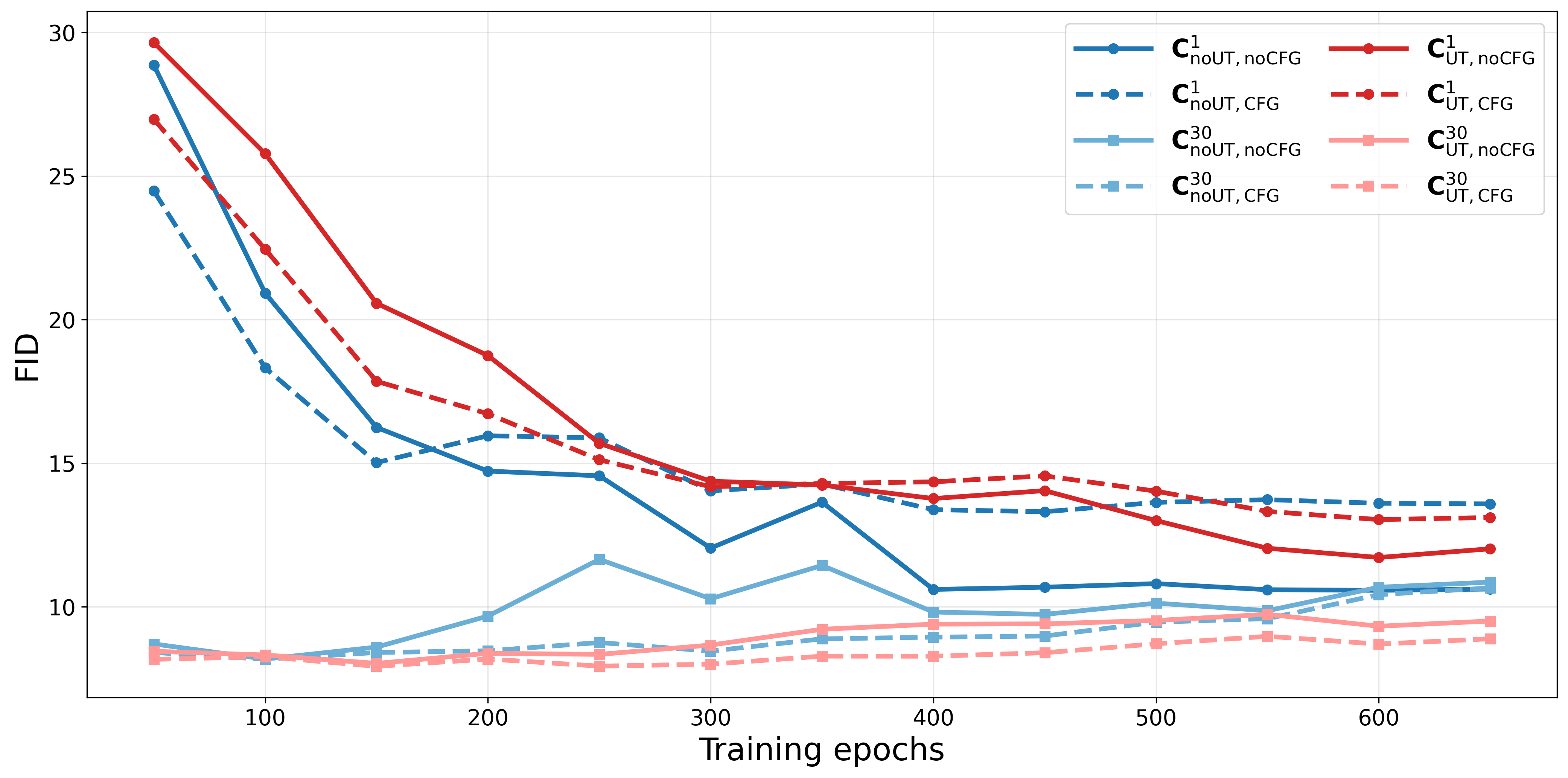}
  \caption{FID curves on VITON-HD under the unpaired evaluation setting. All models are trained in the mask-free setting, and the eight configurations are obtained by varying three factors: whether unconditional training samples are included, whether classifier-free guidance (CFG) is used during inference, and whether inference is performed with 1 or 30 sampling steps. Models are trained for 650 epochs and evaluated every 50 epochs. Detailed discussion is provided in Sec.~\ref{sec:s_stage}.}
  \label{fig:cfg_ablation}
\end{figure}

\subsection{Stage1: conditional transport path straightening}
\label{sec:s_stage}

We begin with a simple theorem that formalizes the ideal conditional case motivating our method. The proof is given in Appendix~\ref{appendix:proof_theorem1}.

\begin{theorem}
Under the Optimal Transport conditional path in flow matching, if the conditional input uniquely determines the target, then the conditional transport path is straight, and one-step sampling is equivalent to multi-step sampling.
\end{theorem}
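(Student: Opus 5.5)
The plan is to formalize the statement in the flow-matching setup of \citet{lipman2022flow}, using the convention from the introduction: $x_0=\epsilon\sim\mathcal N(0,I)$ and $x_1$ is data. For a condition $c$ (here the pair $(z_p^{in},z_g^{in})$), the OT conditional path is
\[
x_t = t x_1 + (1-t)\epsilon .
\]
\textbf{Hypothesis.} ``The conditional input uniquely determines the target'' means the target distribution $q(x_1\mid c)$ is a Dirac mass $\delta_{y(c)}$.

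\textbf{Step 1: compute the marginal velocity.} The marginal velocity field learned by flow matching is
\[
v_t(x\mid c)=\mathbb E\bigl[x_1-\epsilon \,\big|\, x_t=x,\,c\bigr].
\]
Under the Dirac hypothesis, $x_1=y$ almost surely. Given $x_t=x$ with $t<1$, the noise is then determined exactly:
\[
\epsilon=\frac{x-ty}{1-t}.
\]
So the conditional expectation collapses to a single term, and no averaging over endpoints occurs:
\[
v_t(x\mid c)=y-\frac{x-ty}{1-t}=\frac{y-x}{1-t}.
\]
This is exactly the formula stated in the introduction. The point to stress is that, for $t\in[0,1)$, this holds for every $x$ in the support of $p_t(\cdot\mid c)$, which is all of $\mathbb R^d$ because of the Gaussian component. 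So the field is defined globally and not just along sampled paths.

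\textbf{Step 2: show the trajectories are straight.} Solve the ODE $\dot x_t=(y-x_t)/(1-t)$ from an arbitrary $x_0=\epsilon$. Set $u_t=y-x_t$. Then
\[
\dot u_t=-\frac{u_t}{1-t},
\]
whose unique solution is $u_t=(1-t)u_0$. Hence
\[
x_t=ty+(1-t)\epsilon.
\]
This is a straight line with constant velocity $\dot x_t=y-\epsilon$, so every integral curve of the marginal field is straight and ends at $y$. Uniqueness follows from the field being Lipschitz in $x$ on $[0,1-\delta]$ for every $\delta>0$. The endpoint $t=1$ is handled by continuity of the explicit solution.

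\textbf{Step 3: compare one-step and multi-step sampling.} Take any Euler discretization $0=t_0<\dots<t_N=1$. Suppose $x_{t_k}=t_k y+(1-t_k)\epsilon$ lies on the line. The Euler update gives
\[
x_{t_{k+1}}=x_{t_k}+(t_{k+1}-t_k)\,\frac{y-x_{t_k}}{1-t_k}=x_{t_k}+(t_{k+1}-t_k)(y-\epsilon),
\]
which again lies on the line at time $t_{k+1}$. By induction, Euler has zero discretization error and outputs exactly $y$ for every $N$, including $N=1$, where the output is $\epsilon+(y-\epsilon)=y$. The same holds for any Runge--Kutta scheme, since the velocity is constant along the solution. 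Therefore one-step and multi-step sampling coincide, both returning $y$.

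\textbf{Main obstacle.} The computations are routine; the care goes into the measure-theoretic reading of ``uniquely determines.'' The marginal velocity is defined only $p_t$-almost everywhere as a conditional expectation. I would state the theorem for a Dirac target, where the formula holds pointwise, and remark on the singularity at $t=1$. The velocity $(y-x)/(1-t)$ blows up there, but the one-step Euler update only evaluates it at $t=0$, and the ODE solution extends continuously to $t=1$.
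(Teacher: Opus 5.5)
Your proposal is correct and follows essentially the paper's route: you derive $v_t(x\mid c)=(y-x)/(1-t)$ from the Dirac target, then show by induction over an arbitrary Euler grid that every iterate stays on the line $(1-t)\epsilon+t\,y$, so any number of steps (including one) returns $y$; your explicit solution of the continuous ODE in Step~2 is a modest strengthening, since the paper infers straightness only from the Euler iterates. You should drop or qualify the aside that ``any Runge--Kutta scheme'' is exact: Heun and classical RK4 have a stage at time $1$ on the last interval, where every line passes through $y$, so no single value $w$ of the field at $(1,y)$ can equal $y-\epsilon$ for every $\epsilon$ (with last step size $h$, Heun returns $y+\tfrac{h}{2}\bigl(w-y+\epsilon\bigr)$), and the claim holds only for schemes whose stage times stay strictly below $1$.
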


Based on this motivation, our view is different from prior works that aim to straighten the marginal transport path itself~\citep{liu2023rectifiedflow,liu2023instaflow,yang2024consistency,kornilov2024optimal}. In VTON, the conditional structure already makes the desired transport path close to straight. Therefore, our goal is not to force an inherently curved marginal path to become straight, but to help the model better exploit the strongly constrained conditional structure of try-on generation.

\paragraph{Pure conditional transport.}
To better understand how classifier-free guidance~\citep{ho2022classifier} affects final performance, we conduct an experiment on VITON-HD~\citep{choi2021viton} based on the architecture described above. Training is performed in the mask-free setting, and evaluation is conducted under the unpaired setting using FID~\citep{parmar2022aliased}. We consider eight configurations obtained by varying three factors: whether unconditional training samples are included, whether classifier-free guidance (CFG) is used during inference, and whether inference is performed with one step or 30 steps. We denote a configuration by $\mathbf{C}_{\text{A,B}}^{K}$, where $\text{A}\in\{\text{UT},\text{noUT}\}$ indicates whether unconditional training samples are included, $\text{B}\in\{\text{CFG},\text{noCFG}\}$ indicates whether CFG is used during inference, and $K\in\{1,30\}$ denotes the number of sampling steps used at inference. In the unconditional training setting, we remove the garment condition to form the unconditional branch and the unconditional ratio is 0.2. When CFG is used during inference, the guidance scale is set to 2. We train the model for 650 epochs in total and evaluate it every 50 epochs, and the results are shown in Fig.~\ref{fig:cfg_ablation}. First, comparing $\mathbf{C}_{\text{noUT,noCFG}}^{30}$ with $\mathbf{C}_{\text{noUT,noCFG}}^{1}$, we observe that the best performance under 30-step sampling is reached at around 100 epochs, whereas the performance under one-step sampling continues to improve until around 400 epochs. This suggests that, after 100 epochs, the main improvement is no longer better modeling of the target distribution itself, but a progressively straighter conditional transport path. \textbf{This provides evidence that the try-on task naturally admits a much straighter transport path.} Although a fine-tuned pretrained model does not initially possess this property, the mismatch is gradually reduced as fine-tuning continues. Second, comparing $\mathbf{C}_{\text{UT,CFG}}^{1}$ with $\mathbf{C}_{\text{UT,noCFG}}^{1}$, and $\mathbf{C}_{\text{noUT,CFG}}^{1}$ with $\mathbf{C}_{\text{noUT,noCFG}}^{1}$, we find that \textbf{removing CFG during inference consistently improves one-step performance}, regardless of whether unconditional training samples are included. This is consistent with our analysis: the unconditional transport path tends to be more curved, and therefore under one-step sampling it is more likely to deviate from the target distribution, which harms final performance. Third, comparing $\mathbf{C}_{\text{UT,noCFG}}^{1}$ with $\mathbf{C}_{\text{noUT,noCFG}}^{1}$ shows that even if CFG is not used during inference, including unconditional training samples still leads to worse performance. This indicates that \textbf{unconditional training also makes the learned conditional transport path more curved.} Intuitively, although unconditional and conditional training are intended to model two different transport paths, they are optimized within the same model and therefore inevitably interfere with each other. Based on these observations, we remove unconditional training samples during training and do not use CFG during inference. More importantly, these experiments not only motivate the design of pure conditional transport, but also \textbf{support our central claim that the conditional transport path in try-on is naturally much straighter than that in general image generation.}

\begin{figure}[t]
  \centering
  \includegraphics[width=\linewidth]{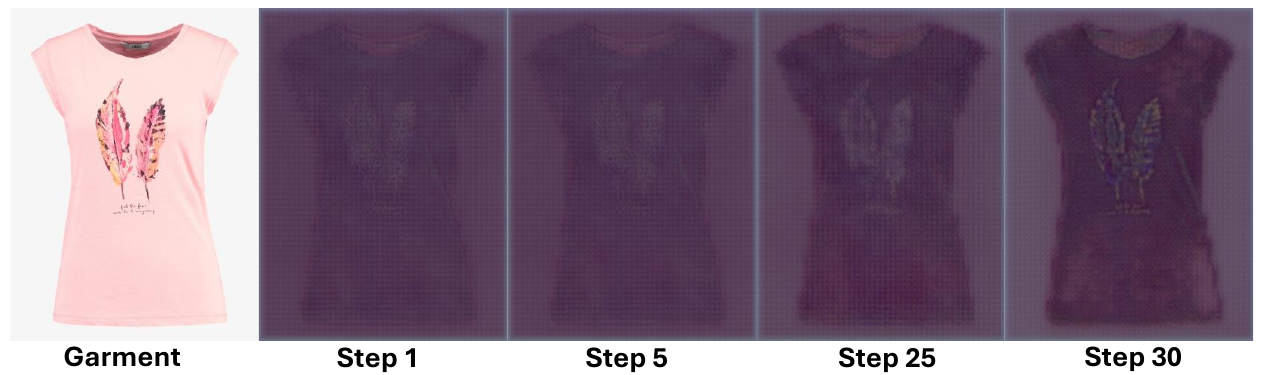}
  \caption{Visualization of garment reconstruction during 30-step inference. The leftmost image is the input garment, and the others show the reconstructed garment output from the garment branch at the 1st, 5th, 25th, and 30th sampling steps. Early steps mainly preserve coarse low-frequency structure, while richer high-frequency details emerge at later steps.}
  \label{fig:garment_freq}
\end{figure}

\paragraph{Garment preservation loss.}
In flow matching, the marginal velocity at early timesteps tends to emphasize low-frequency information~\citep{park2025blockwise}. To examine how this property affects the try-on setting, we conduct an experiment using the same architecture described in Sec.~\ref{sec:net}, without any additional modifications. The model is trained on VITON-HD for 100 epochs and evaluated with 30-step inference. Since our architecture adopts an MMDiT style design with a separate garment stream, the garment branch also produces a garment latent representation. We denote the garment latent predicted by this branch as $z_g^{out}$, and the VAE-encoded latent of the input garment as $z_g^{in}$. For visualization in Fig.~\ref{fig:garment_freq}, we decode $z_g^{out}$ back to the image space.
As shown in Fig.~\ref{fig:garment_freq}, at early timesteps the decoded garment output mainly preserves coarse low-frequency structure, while richer high-frequency details emerge at later timesteps. This suggests that pretrained model weights tend to emphasize low-frequency information at early timesteps. When adapted to virtual try-on, the model may therefore underuse or gradually lose fine-grained garment details along the trajectory. Although this issue can be alleviated to some extent with longer fine-tuning, we introduce a more direct training signal by adding a garment preservation loss in the latent space:
\[
L_g = \left\| z_g^{out} - z_g^{in} \right\|_2^2 .
\]
This objective encourages the garment branch to preserve the input garment representation at every timestep, thereby helping the model maintain garment details across the entire sampling trajectory.

\paragraph{Self consistency loss.}
To encourage the model to learn that the overall conditional transport path should be close to straight, we introduce an aggressive self consistency loss. Given two arbitrary timesteps $t_1$ and $t_2$ with $t_1 \leq t_2$, and recalling that $x_t = t y + (1 - t)\epsilon$, let $v_t^{pred}$ denote the velocity vector predicted by the model at timestep $t$. We define
\[
L_{\mathrm{cons}} =
\left\| v^{\mathrm{pred}}_{t_1}
-
\mathrm{stopgrad}\!\left(v^{\mathrm{pred}}_{t_2}\right)
\right\|_2^2 .
\]

This loss has two key characteristics. First, we do not impose any constraint on the choice of timesteps. In many previous methods that also aim to enforce consistency, such as Consistency Models~\citep{song2023consistency}, the two timesteps are usually constrained, for example by requiring a fixed interval between them. In our setting, however, we argue that such constraints are unnecessary. Our goal is not to introduce consistency as an additional heuristic property, but to reflect the fact that, in our scenario, the conditional transport path should be nearly straight. Therefore, we adopt the more aggressive choice of using arbitrary timestep pairs without additional restrictions. Second, we apply stop gradient to the prediction at the lower noise level, namely $v^{pred}_{t_2}$. The intuition is that lower noise inputs contain richer target information and thus yield more reliable predictions. We therefore treat the lower noise prediction as the target and force the higher noise prediction to align with it. Without stop gradient, both sides would be updated simultaneously, which in practice degrades performance. We provide ablations on both design choices in Appendix~\ref{appendix:ablation}.

Combining the above modifications, our conditional transport path straightening stage is trained as follows. For each training sample, we randomly choose two timesteps $t_1$ and $t_2$, and optimize the model using the original flow matching loss together with the garment preservation loss and the self consistency loss. The overall training objective is
\[
L = \sum_{t\in\{t_1,t_2\}} L_{\mathrm{fm}} + \alpha \sum_{t\in\{t_1,t_2\}} L_g + \beta L_{\mathrm{cons}} .
\]
where $\alpha$ and $\beta$ control the weights of the garment preservation loss and the self consistency loss, respectively. Unless otherwise specified, we set $\alpha=0.1$ and $\beta=0.05$. Moreover, the pretrained model is not inherently aligned with the near straight conditional transport required by VTON. We therefore adopt longer fine-tuning in this stage to reduce the mismatch between the pretrained model and the try-on task. More details are provided in Sec.~\ref{sec:ex}.

\subsection{Stage2: one-step distillation}
\label{sec:distill}
In this paper, our final goal is to obtain a one-step generator for the try-on task. Although the conditional transport path straightening stage already yields comparable one-step performance (e.g., FID 9.53 under the unpaired setting on VITON-HD), there is still a significant mismatch between training and inference, since the model in this stage is trained with 1000 timesteps but evaluated with one-step sampling. To further reduce this mismatch and improve one-step generation quality, we directly adopt Latent Adversarial Diffusion Distillation (LADD)~\citep{sauer2024fast} to distill a one-step model from the model obtained in the straightening stage. Specifically, we use the model trained in the straightening stage as the teacher model and follow the original LADD framework to train a one-step student model. Unless otherwise specified, all distillation settings follow LADD.

\section{Experiments}

\subsection{Experimental setup}
\label{sec:ex}
\paragraph{Datasets.}
In this paper, we conduct experiments on the VITON-HD~\citep{choi2021viton} and DressCode~\citep{morelli2022dress} datasets, with all ablation studies performed on VITON-HD. VITON-HD contains only upper-body garments, whereas DressCode includes three garment categories: upper-body, lower-body, and dresses. Both datasets provide paired samples, each consisting of a person image and a corresponding garment image. Such paired data cannot be directly used for mask-free training. Therefore, we first train a mask-based model to generate the required training data. Further details are provided in Appendix~\ref{appendix:ed}.

\paragraph{Implementation details.}
For network initialization, the MMDiT style backbone is initialized from Stable Diffusion 3 Medium~\citep{esser2024rectifiedflow}. Since both streams in our architecture are image streams rather than text streams, they are both initialized from the image stream of SD3 Medium. We train separate models for VITON-HD and DressCode.
In the straightening stage, we train the VITON-HD model for 400 epochs and the DressCode model for 200 epochs. For VITON-HD, the learning rate is linearly warmed up to $1\times10^{-4}$ in the first 5 epochs, kept at $1\times10^{-4}$ for 50 epochs, linearly decayed to $0$ over the next 45 epochs, and then fixed at $1\times10^{-5}$ for the remaining 300 epochs. For DressCode, we follow the same schedule pattern, with 5 warmup epochs, 25 epochs at $1\times10^{-4}$, 20 decay epochs, and 150 epochs at $1\times10^{-5}$.
In the distillation stage, we further train the VITON-HD model for 40 epochs and the DressCode model for 20 epochs, both with a learning rate of $1\times10^{-5}$. Both stages use the AdamW optimizer~\citep{loshchilov2017fixing}. Further training details, including the LADD implementation, are provided in Appendix~\ref{appendix:ed}.

\paragraph{Baselines.}
We compare our method with several recent state-of-the-art approaches, including CatVTON~\citep{chong2025catvton}, Leffa~\citep{zhou2025flowattention}, OOTDiffusion~\citep{xu2025ootdiffusion}, and 
Any2anyTryon~\citep{guo2025any2anytryon}, using their official model weights and inference code. Since these methods are not designed for sampling acceleration, we standardize the number of inference steps to 30 for all of them. We would like to compare with methods which also try to explicitly target accelerated sampling, however, CAT-DM~\citep{zeng2024cat} only released GC-DM~\citep{zeng2024cat} weights and inference code, and MC-VTON has not open-sourced the code, so we do not include them in the comparison baselines.

\paragraph{Evaluation metrics and protocols.}
Previous virtual try-on methods typically evaluate performance under both paired and unpaired settings. In the paired setting, the model reconstructs the original person image with the same garment, whereas in the unpaired setting, the garment is replaced with a different one~\citep{choi2021viton}. In this paper, since we focus on training mask-free models, we report results only under the unpaired setting, which better reflects real-world applications. Following prior works~\citep{chong2025catvton,jiang2024fitdit,zhou2025flowattention}, we adopt Fréchet Inception Distance (FID)~\citep{parmar2022aliased} and Kernel Inception Distance (KID)~\citep{binkowski2018demystifying} as quantitative metrics for the unpaired setting. We also compare inference speed on a PPU-810E, defined as the end-to-end time from input to output, including preprocessing.
All evaluations are conducted at a resolution of \(768 \times 1024\). For methods that only support \(384 \times 512\) resolution, including CatVTON and Any2anyTryon, we run inference at \(384 \times 512\) and then upsample the generated results to \(768 \times 1024\) for FID and KID evaluation. This protocol does not substantially bias the comparison. For example, evaluating CatVTON directly at \(384 \times 512\) gives an FID/KID of 9.36/1.19, while evaluating the upsampled results at \(768 \times 1024\) gives an FID/KID of 9.40/1.27. This resizing protocol is used only for FID/KID evaluation. For inference speed, we evaluate all methods directly at \(768 \times 1024\) to ensure a fair comparison.

\begin{figure}[t]
  \centering
  \includegraphics[width=\linewidth]{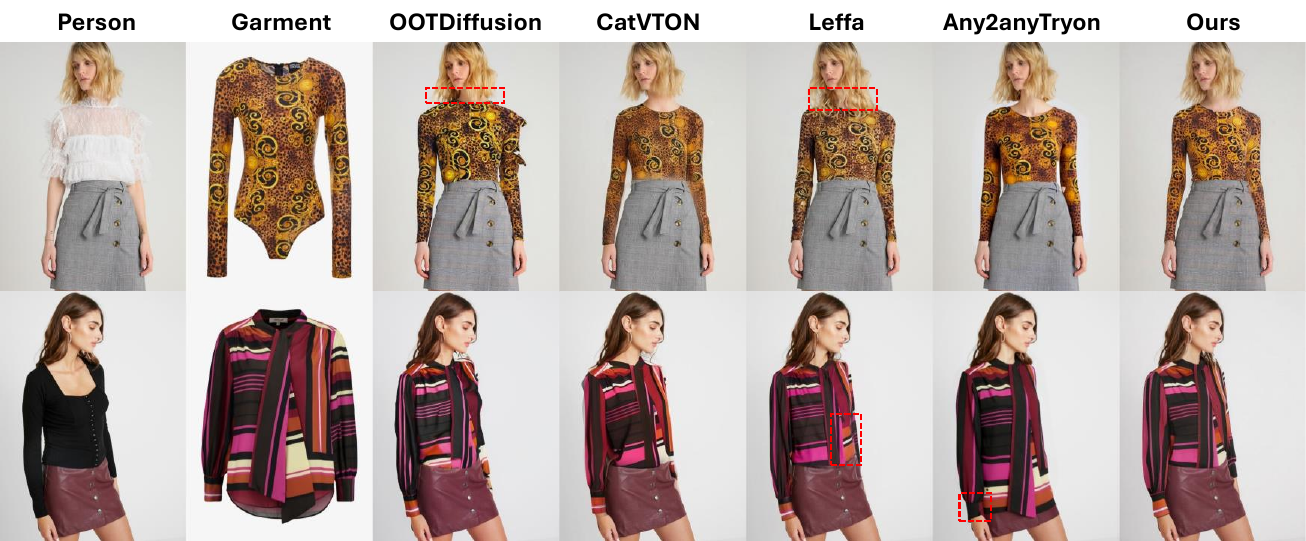}
  \caption{Qualitative comparisons on the VITON-HD~\protect\citep{choi2021viton} dataset.}
  \label{fig:viton}
\end{figure}

\begin{table*}[t]
\centering
\caption{Quantitative comparisons on the VITON-HD~\protect\citep{choi2021viton} and
DressCode~\protect\citep{morelli2022dress} datasets. Best in \textbf{bold} and second best \underline{underlined}.}
\fontsize{8pt}{9pt}
\selectfont
\begin{tabular}{cccccc}
  \toprule
  \multicolumn{1}{c}{\textbf{Dataset}} 
  & \multicolumn{2}{c}{VITON-HD} 
  & \multicolumn{2}{c}{DressCode} 
  & \multirow{2}{*}{Time (s) ↓} \\
  \cmidrule(lr){1-1}\cmidrule(lr){2-3}\cmidrule(lr){4-5}
  \multicolumn{1}{c}{\textbf{Method}}
  & FID ↓ & KID ↓
  & FID ↓ & KID ↓
  &  \\
  \midrule
  OOTDiffusion~\citep{xu2025ootdiffusion} & \underline{9.02} & \underline{0.63} & 7.10 & 2.28 & \underline{9.24} \\
  CatVTON~\citep{chong2025catvton}        & 9.40 & 1.27 & \underline{5.24}  & \underline{1.21} & 10.98 \\
  Leffa~\citep{zhou2025flowattention}     & 9.38  & 0.92 & 6.17 & 1.90 & 11.55 \\
  Any2anyTryon~\citep{guo2025any2anytryon} & 9.29  & 1.22  & - & - & 61.65 \\
  \midrule
  \textbf{DirectTryOn (ours)}                & \textbf{8.59} & \textbf{0.56} & \textbf{5.08} & \textbf{0.95} & \textbf{0.48} \\
  \bottomrule
\end{tabular}
\label{tab:viton}
\end{table*}
\subsection{Qualitative and quantitative results}
\label{subsec:compare}

\paragraph{Qualitative comparison.}
Qualitative results are shown in Fig.~\ref{fig:viton}. Although our method uses only one-step sampling, it still achieves competitive visual quality compared with multi-step baselines. In particular, one-step sampling does not degrade the try-on results, and the garment appearance is well preserved. This observation is consistent with our previous analysis: since virtual try-on is highly constrained by the person and garment conditions, the generation process is nearly deterministic, making one-step generation feasible after conditional path straightening.
We further discuss the effect of mask-based and mask-free designs. Both Any2anyTryon and our method are mask-free, while other baselines rely on explicit masks. As shown in the first row, identity-related regions, such as the hair, can sometimes be altered not only by mask-free methods but also by mask-based methods, especially when the mask is inaccurate or overly aggressive. This suggests that using an explicit mask does not necessarily guarantee better preservation of non-garment regions.

\paragraph{Quantitative comparison.}
We report quantitative results in Tab.~\ref{tab:viton}. For Any2anyTryon, since only the weights trained on VITON-HD are publicly available, we compare with it only on the VITON-HD dataset. Our method is mask-free and therefore does not require additional mask preprocessing. Moreover, since we disable CFG during inference and perform only one sampling step, our method takes only 0.48s on a PPU-810E. Even compared with OOTDiffusion, the fastest baseline among all compared methods, which takes 9.24s, our method is still about 19$\times$ faster. More importantly, despite this substantial acceleration, our method still achieves state-of-the-art performance.

\begin{figure}[t]
  \centering
  \includegraphics[width=\linewidth]{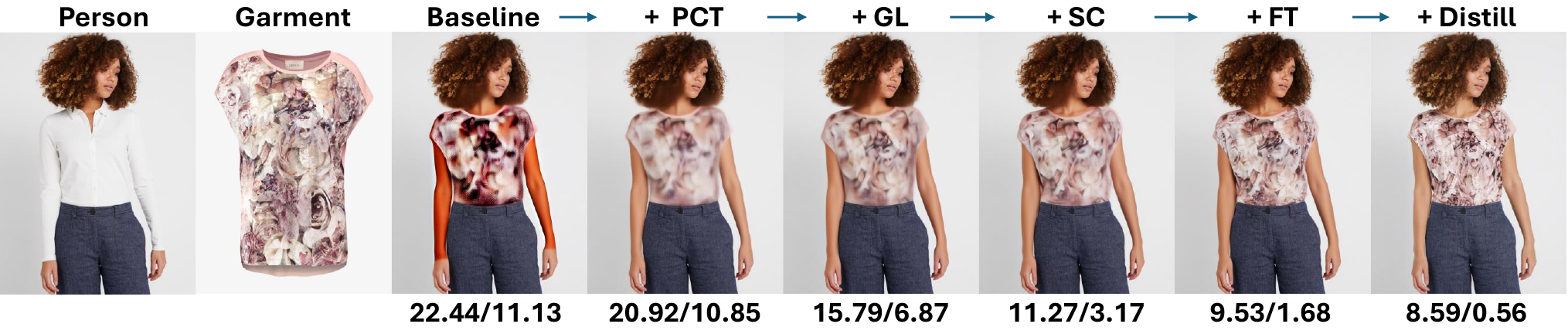}
  \caption{
   Qualitative and quantitative ablation results. From left to right, each variant progressively applies one additional modification to the previous setting: pure conditional transport (PCT), adding the garment preservation loss (GL), adding the self consistency loss (SC), using longer fine-tuning (FT), and finally applying LADD distillation. The numbers below each result denote FID/KID, where lower values indicate better performance.
   }
  \label{fig:ablation_main}
\end{figure}

\subsection{Ablation study}
\label{ablation}

To validate our proposed modifications, we conduct an ablation study on VITON-HD in the mask-free and unpaired setting. As shown in Fig.~\ref{fig:ablation_main}, the baseline uses unconditional training samples by dropping the garment condition with a ratio of 0.2, and applies CFG at inference with a guidance scale of 2. For one-step VTON, both designs are harmful: unconditional samples introduce an additional transport path that is not fully specified by the garment-person condition, while CFG extrapolates the sampling direction using the unconditional prediction, which can push the one-step result away from the target conditional distribution. Removing both yields cleaner conditional transport and substantially improves the result.
Adding the garment preservation loss and self consistency loss further helps the model learn a straighter conditional transport path. Up to this point, all variants are trained for 100 epochs. Since pretrained weights are not naturally aligned with this near straight conditional structure, extending fine-tuning by 300 epochs further improves performance, yielding the straightened teacher model. Finally, applying LADD further reduces the gap between training and one-step inference, leading to the best overall performance.

\section{Conclusion}

In this paper, we study one-step virtual try-on from the perspective of conditional transport. We argue that, compared with general image generation, virtual try-on is a more strongly conditioned task, in which the target image is largely constrained by the person image and the garment image. This suggests that its conditional transport path can be much straighter, making one-step generation a natural solution. Based on this observation, we propose a framework consisting of a conditional path straightening stage and a distillation stage. Extensive experiments on VITON-HD and DressCode show that our method achieves strong one-step try-on performance.

We hope this work can also provide useful inspiration for other generation tasks with similarly strong conditional constraints, where exploiting straighter conditional transport paths may lead to more efficient generation.

\newpage
\bibliographystyle{plainnat}
\bibliography{neurips_2026}

\appendix
\setcounter{theorem}{0}

\section{Proof of Theorem~1}
\label{appendix:proof_theorem1}

\begin{theorem}
Under the Optimal Transport conditional path in flow matching, if the conditional input uniquely determines the target, then the conditional transport path is straight, and one-step sampling is equivalent to multi-step sampling.
\end{theorem}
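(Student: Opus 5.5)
To prove the theorem, I will formalize ``uniquely determines'' as a degenerate conditional target distribution $q(x_1\mid c)=\delta_{y(c)}$, with $y(c)$ a fixed point given the condition $c$. Within the Optimal Transport conditional path of flow matching, $x_t = t x_1 + (1-t)\epsilon$ with $\epsilon\sim\mathcal N(0,I)$. The conditional probability path is $p_t(x\mid c)=\int p_t(x\mid x_1)\,q(x_1\mid c)\,dx_1$, and the marginal velocity is $u_t(x\mid c)=\mathbb E[\,x_1-\epsilon \mid x_t=x, c\,]$. I will compute this field explicitly under the degenerate assumption.

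The first step evaluates the conditional expectation. Because $x_1=y$ almost surely given $c$, observing $x_t=x$ pins down the noise as $\epsilon=(x-ty)/(1-t)$ for every $t<1$. The posterior over $(x_1,\epsilon)$ is therefore a point mass, and
\[
u_t(x\mid c)= y-\frac{x-ty}{1-t}=\frac{y-x}{1-t}.
\]
Put differently, the marginalization that normally averages several sample-wise straight velocities, and so bends the field, has only one term to average. The second step solves the ODE $\dot x_t=u_t(x_t\mid c)$ from an initial point $x_0=\epsilon$. I will check that $x_t=ty+(1-t)\epsilon$ solves it: the left side is $\dot x_t=y-\epsilon$, and the right side is $(y-x_t)/(1-t)=(1-t)(y-\epsilon)/(1-t)=y-\epsilon$. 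The field is Lipschitz in $x$ on $[0,1-\delta]$, so the solution is unique there. Along this trajectory the velocity $y-\epsilon$ is constant in $t$, which means the path is a straight line segment, and its endpoint is $x_1=y$ in the limit $t\to1$.

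The third step concerns sampling. An Euler scheme on a grid $0=t_0<\dots<t_K=1$ takes steps $x_{t_{k+1}}=x_{t_k}+(t_{k+1}-t_k)\,u_{t_k}(x_{t_k}\mid c)$. I will show by induction that every iterate lies exactly on the line, $x_{t_k}=t_ky+(1-t_k)\epsilon$. The induction works because the velocity evaluated at any point of the line equals $y-\epsilon$, and Euler steps with a constant velocity are exact. Every $K$ therefore ends at $y$, and in particular the one-step map $x_1=\epsilon+u_0(\epsilon\mid c)=y$ gives the same result as multi-step sampling or exact ODE integration.

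The main obstacle is the singularity at $t=1$, where the form $(y-x)/(1-t)$ blows up. I will handle it in two ways. Sampling only evaluates the velocity at $t_k<1$, so the singular point is never used. The trajectory itself is defined on $[0,1)$ and extended continuously to $t=1$. A secondary point of rigor is making the point-mass posterior precise. I will do this by writing $p_t(x\mid c)=\mathcal N(x;ty,(1-t)^2I)$ and computing $u_t$ from the continuity equation, or equivalently from the flow-matching formula $u_t(x\mid c)=\int u_t(x\mid x_1)\,p_t(x\mid x_1)q(x_1\mid c)\,dx_1/p_t(x\mid c)$. This formula reduces directly to $u_t(x\mid y)=(y-x)/(1-t)$.
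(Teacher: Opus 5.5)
Your proposal is correct and follows essentially the same route as the paper: derive $v_t(x\mid c)=(y(c)-x)/(1-t)$ from the point-mass conditional target, then show by induction that the Euler iterates stay on the segment $(1-t_k)\epsilon+t_k\,y(c)$, so the one-step update and any multi-step grid both end at $y(c)$. Your extra check that the exact ODE solution is this straight line (unique on $[0,1-\delta]$ by Lipschitz continuity), and your remark that the $t=1$ singularity is never evaluated, add a little rigor beyond the paper, which infers straightness only from the Euler iterates lying on the line.
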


\begin{proof}
Under the Optimal Transport conditional path in flow matching,
\[
X_t=(1-t)X_0+tX_1,
\]
the conditional velocity field can be written as
\[
v_t(x\mid c)=\mathbb{E}\!\left[\frac{X_1-x}{1-t}\,\middle|\,X_t=x,\ c\right].
\]

As the conditional input $c$ uniquely determines the target, i.e.,
\[
X_1=y(c).
\]
Therefore,
\[
v_t(x\mid c)=\frac{y(c)-x}{1-t}.
\]

For one-step sampling, starting from an arbitrary noise $\epsilon$ at $t=0$,
\[
x_0=\epsilon.
\]
Then
\[
x_1 = x_0 + v_0(x_0\mid c)
     = \epsilon + \bigl(y(c)-\epsilon\bigr)
     = y(c).
\]

For multi-step sampling, let
\[
0=t_0<t_1<\cdots<t_N=1,
\]
and define the update
\[
x_{t_{k+1}} = x_{t_k} + (t_{k+1}-t_k)\,v_{t_k}(x_{t_k}\mid c).
\]
We prove by induction that
\[
x_{t_k}=(1-t_k)\epsilon+t_k\,y(c), \qquad \forall k.
\]
Therefore, all sampled points lie on the same linear path from $\epsilon$ to $y(c)$, which implies that the conditional transport path is straight.
 
For the base case,
\[
x_{t_0}=x_0=\epsilon=(1-t_0)\epsilon+t_0\,y(c).
\]

Assume
\[
x_{t_k}=(1-t_k)\epsilon+t_k\,y(c).
\]
Then
\[
v_{t_k}(x_{t_k}\mid c)
=\frac{y(c)-x_{t_k}}{1-t_k}
=\frac{y(c)-\bigl((1-t_k)\epsilon+t_k\,y(c)\bigr)}{1-t_k}
= y(c)-\epsilon.
\]
Therefore,
\begin{align*}
x_{t_{k+1}}
&= x_{t_k} + (t_{k+1}-t_k)\,v_{t_k}(x_{t_k}\mid c) \\
&= (1-t_k)\epsilon+t_k\,y(c) + (t_{k+1}-t_k)\bigl(y(c)-\epsilon\bigr) \\
&= (1-t_{k+1})\epsilon+t_{k+1}\,y(c).
\end{align*}
Thus the induction holds.

Finally, at $t_N=1$,
\[
x_{t_N}=(1-t_N)\epsilon+t_N\,y(c)=y(c).
\]
Hence multi-step sampling reaches the same final result as one-step sampling.
\end{proof}

\section{More experiment details}
\label{appendix:ed}
\paragraph{Training data.}
In this paper, we mainly focus on training a mask-free model for one-step generation. Compared with mask-based training, which only requires paired data consisting of a person image and the corresponding garment image, mask-free training requires at least three images: two person images of the same identity wearing different garments, and one garment image corresponding to one of these two person images. Therefore, when using existing datasets such as VITON-HD~\citep{choi2021viton} and DressCode~\citep{morelli2022dress}, which only provide paired data, we need to generate an additional person image wearing a different garment for each pair in order to construct mask-free training data.
To this end, we first train a mask-based model and use it to synthesize the required training data. This mask-based model adopts the same network architecture as our mask-free model, with the main difference lying in the inputs. Specifically, the person image is replaced by an agnostic image, where the garment region is masked out using segmentation information~\citep{gong2018instance,cao2019openpose}. In addition, after the patch embedding layer, we add the DensePose image~\citep{guler2018densepose} of the person to the noisy latent, so that the generated result is encouraged to preserve the original pose. This design helps avoid pose inconsistency caused by inaccurate masks.

\paragraph{LADD implementation.}
In the distillation stage, we adopt Latent Adversarial Diffusion Distillation (LADD)~\citep{sauer2024fast} to compress our straightened teacher model into a one-step student, with one-step generation as the final target. We first use the straightened teacher model offline to generate pseudo ground-truth try-on images. Specifically, for each training sample, we randomly pair a person image with a garment image from the dataset, and use the teacher model to synthesize the corresponding try-on result. The generated result is then treated as the target image for student training.
During distillation, the student is initialized from the teacher model and takes the pure noisy target latent concatenated with the person latent as input, while using the garment latent as the try-on condition. It predicts the velocity field, from which we recover the clean latent prediction. The student is trained with a reconstruction loss between its predicted clean latent and the teacher-generated target latent.
In addition to this reconstruction objective, we follow LADD and introduce a latent adversarial objective based on frozen teacher features. Instead of using an external image-space discriminator, we re-noise both the teacher-generated target latent and the student-predicted latent to randomly sampled noise levels, and feed them into the frozen teacher transformer. We extract intermediate hidden states from multiple transformer blocks and attach lightweight convolutional discriminator heads to these features. The discriminator is trained to distinguish re-noised teacher targets from re-noised student predictions, while the student is trained to fool the discriminator. Since our model is initialized from Stable Diffusion 3-Medium, which contains 24 transformer blocks, we sample 8 blocks across the transformer depth to provide multi-level adversarial supervision.

\paragraph{Training details.}
All experiments in this paper are conducted on a cluster of 16 PPU-810E processors, each with 100 GB of memory. For the path-straightening stage, the batch size is set to 8 per processor, resulting in a total batch size of 128. Training takes approximately 6 days for 400 epochs on VITON-HD and 11 days for 200 epochs on DressCode.
For the distillation stage, the batch size is set to 3 per processor, resulting in a total batch size of 48. Distillation takes approximately 9 hours for 40 epochs on VITON-HD and 18 hours for 20 epochs on DressCode.

\begin{table*}[t]
  \centering
  \caption{Ablation study on two design choices in the self consistency loss.}
  \label{tab:ablation_sc}
  \begin{tabular}{ccc}
    \toprule
    Version & FID~\(\downarrow\) & KID~\(\downarrow\) \\
    \midrule
    Fixed Timestep Interval & 14.15 & 5.52 \\
    Without StopGrad & 13.26 & 4.78 \\
    \textbf{Ours} & \textbf{11.27} & \textbf{3.17} \\
    \bottomrule
  \end{tabular}
\end{table*}

\section{More ablation studies}
\label{appendix:ablation}

Here we provide additional ablations on the design of our self consistency loss. Our method contains two special design choices: first, we do not impose any constraint on timestep sampling for the consistency loss; second, we apply \texttt{stopgrad} to the lower-noise prediction, rather than simply requiring the predictions at two timesteps to be the same.
For this ablation study, we use mask-free training and evaluate under the unpaired setting on VITON-HD. All models are trained for 100 epochs. In all three variants, pure conditional transport and the garment preservation loss are applied. The only difference lies in the design of the self consistency loss. The original version used in our method is denoted as \textbf{Ours}. The variant \textbf{Fixed Timestep Interval} requires the two sampled timesteps to have a fixed interval of 100, for example 900 and 800. The variant \textbf{Without StopGrad} removes the \texttt{stopgrad} operator from the original design. As shown in Tab.~\ref{tab:ablation_sc}, our original design achieves the best performance.

\begin{figure}[t]
  \centering
  \includegraphics[width=\linewidth]{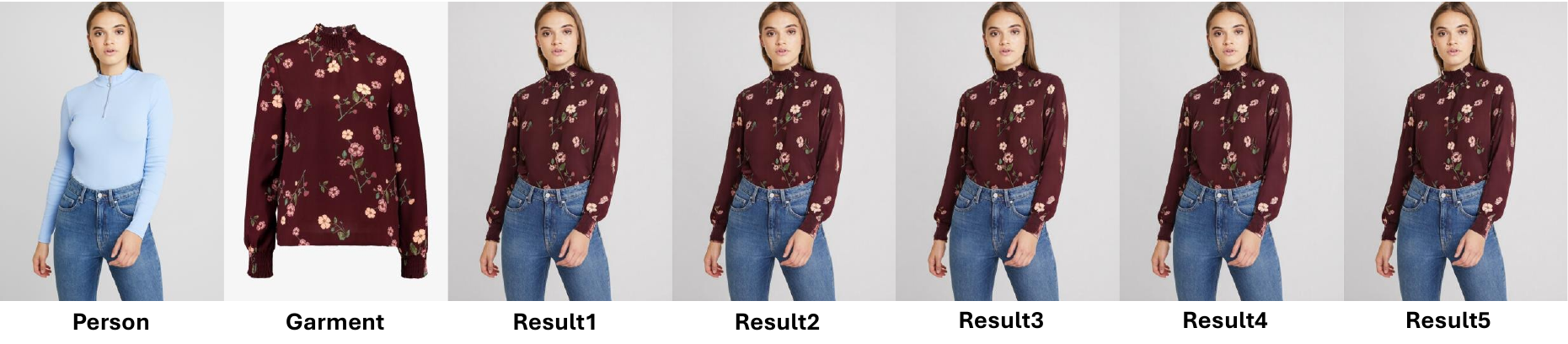}
  \caption{Results of our distilled one-step student model using five different Gaussian noise initializations. For each initialization, the model performs a single denoising step from the sampled Gaussian noise to the final try-on result. Despite starting from different noise samples, the generated results remain highly similar, with only slight variations in garment wrinkles, demonstrating the robustness of one-step inference to the choice of initial noise.}
  \label{fig:mul_s}
\end{figure}

\section{Robustness to noise initializaion}
\label{appendix:test}

Here, we provide an additional test of the distilled one-step student model. We randomly sample five different Gaussian noises, while keeping the same garment image and person image as input, and perform one-step inference from each noise. As shown in Fig.~\ref{fig:mul_s}, the generated results are highly similar, differing only slightly in garment wrinkles. This confirms that our one-step model is robust to different Gaussian noise initializations and does not suffer noticeable quality degradation from different starting noise points.

\section{More analysis}
Recall the CFG ablation in Sec.~\ref{sec:s_stage}. Fig.~\ref{fig:cfg_ablation} reveals several additional observations. First, $\mathbf{C}_{\text{UT,CFG}}^{30}$ consistently achieves the best or near-best performance. This is expected: when CFG is enabled at inference and multi-step sampling is used, introducing unconditional training samples helps the model learn a more reliable velocity field and improves the effectiveness of guidance.
Second, comparing $\mathbf{C}_{\text{noUT,noCFG}}^{30}$ with $\mathbf{C}_{\text{UT,noCFG}}^{30}$, we observe that unconditional training still improves performance even when CFG is disabled during inference. This suggests that unconditional samples are not only useful for CFG, but also regularize the model during training. In the 30-step setting, this regularization allows the model to better explore the underlying data distribution and produce more robust predictions.

\section{Limitations}
In this paper, we do not focus on the distinction between mask-based and mask-free settings. Instead, we train our own mask-based model to generate training data for the mask-free setting. Although this strategy provides relatively reliable training data, the generated data is still not perfect. As a result, our method may inherit a common issue in previous methods~\citep{sun2025ds}: regions unrelated to the target garment, such as hair or accessories, may occasionally be altered. Fortunately, this issue occurs infrequently in practice, and even when it occurs, the artifacts are usually mild.

\section{Broader impacts}
The ability of DirectTryOn to generate realistic virtual try-on results efficiently makes it well suited for practical deployment in e-commerce scenarios. At the same time, as with other generative technologies, DirectTryOn may raise concerns related to intellectual property and personal privacy. We encourage its responsible and ethical use.

\section{Additional qualitative results on DressCode}
In this section, we present additional qualitative comparison results on the DressCode~\citep{morelli2022dress} dataset.

\begin{figure}[p]
  \centering
  \includegraphics[width=\linewidth]{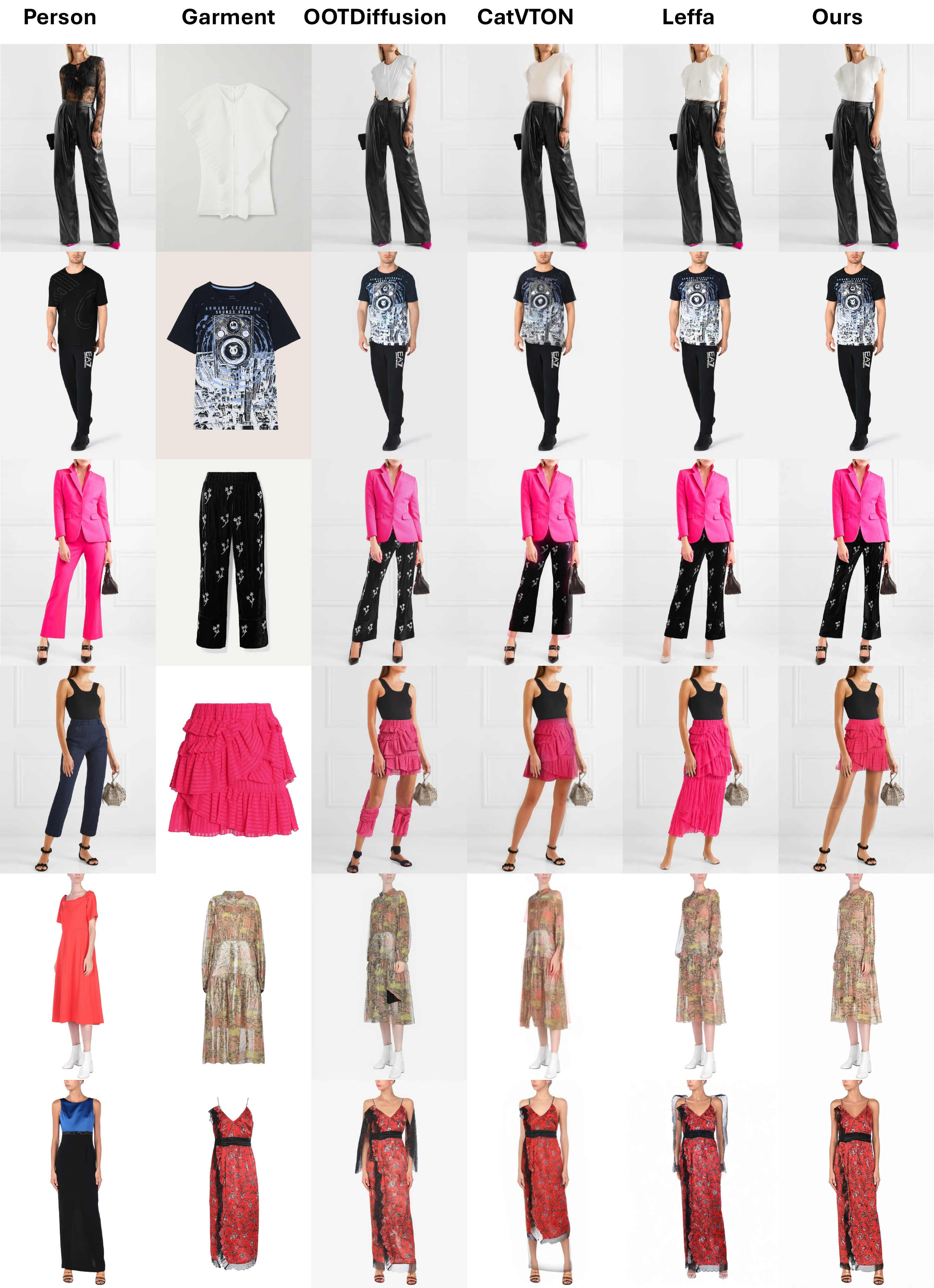}
  \caption{Qualitative comparison on the DressCode dataset.}
  \label{fig:Compare_DressCode_Dress}
\end{figure}


\newpage

\end{document}